\pgfplotsset{compat=newest}
\def\BibTeX{{\rm B\kern-.05em{\sc i\kern-.025em b}\kern-.08em
    T\kern-.1667em\lower.7ex\hbox{E}\kern-.125emX}}
\newtheorem{theorem}{Theorem}[section]
\newtheorem{corollary}{Corollary}[theorem]
\newtheorem{lemma}[theorem]{Lemma}
\theoremstyle{definition}
\newtheorem{definition}{Definition}[section]
\theoremstyle{remark}
\newif\ifentwurf
\newcommand{\quest}[1]{\noindent{\bf\color{blue} Quest : #1}\newline}
\newcommand{\quest}[1]{}
\let\subset\subseteq
\let\cref\Cref
\begin{document}

\title{Greedy Discovery of Ordinal Factors}

\author{\IEEEauthorblockN{Dominik Dürrschnabel}
  \IEEEauthorblockA{\textit{Knowledge and Data Engineering Group} and \\
    \textit{Interdisciplinary Research Center }\\
    \textit{for Information System Design}\\
    \textit{University of Kassel}\\
    Kassel, Germany \\
    duerrschnabel@cs.uni-kassel.de\\
    ORCID: 0000-0002-0855-4185}
  \and
  \IEEEauthorblockN{Gerd Stumme}
  \IEEEauthorblockA{\textit{Knowledge and Data Engineering Group} and \\
    \textit{Interdisciplinary Research Center}\\
    \textit{for Information System Design}\\
    \textit{University of Kassel}\\
    Kassel, Germany \\
    stumme@cs.uni-kassel.de\\
    ORCID: 0000-0002-0570-7908}
}

\maketitle

\begin{abstract}
  In large datasets, it is hard to discover and analyze structure.
  It is thus common to introduce tags or keywords for the items.
  In applications, such datasets are then filtered based on these tags.
  Still, even medium-sized datasets with a few tags result in complex and for humans hard-to-navigate systems.
  In this work, we adopt the method of ordinal factor analysis to address this problem.
  An ordinal factor arranges a subset of the tags in a linear order based on their underlying structure.
  A complete ordinal factorization, which consists of such ordinal factors, precisely represents the original dataset.
  Based on such an ordinal factorization, we provide a way to discover and explain relationships between different items and attributes in the dataset.
  However, computing even just one ordinal factor of high cardinality is computationally complex.
  We thus propose the greedy algorithm \textsc{OrdiFIND} in this work.
  This algorithm extracts ordinal factors using already existing fast algorithms developed in formal concept analysis.
  Then, we leverage \textsc{OrdiFIND} to propose a comprehensive way to discover relationships in the dataset.
  We furthermore introduce a distance measure based on the representation emerging from the ordinal factorization to discover similar items.
  To evaluate the method, we conduct a case study on different datasets.
\end{abstract}

\begin{IEEEkeywords}
  Ordinal Factor Analysis, Ordinal Data Science, Concept Lattice
\end{IEEEkeywords}

\section{Introduction}
\label{sec:introduction}
A binary dataset consists of an object set and an attribute set together with an incidence relation.
Items with tags, keywords, or categories are an example of such binary datasets where the items are the objects, and the tags are the attributes.
Typical examples of tagged data are movies or songs with category or genre tags.
Furthermore, keywords in articles such as blog posts or scientific works are also tagged datasets.
Another example of such data would be researchers publishing at different conferences.
Understanding relationships and extracting knowledge from such datasets based on these tags or keywords quickly becomes an impossible task, while on the other hand being a necessity.
Thus, providing systems that allow extracting explanations and discover relationships in such data is an important research task.

A common way to tackle this task is to treat the binary attributes as numerical, where the value 1 is assigned to an object if it has the attribute and 0 otherwise.
Then, methods from the classical toolkit of dimensionality reduction, such as principal component analysis, are applied.
These approaches merge the different tags into a few axes while weighting the attributes in each axis.
An emerging axis thus yields information about the presence or absence of correlated features in the original dataset.
Then, each object is assigned a real-valued number in each axis to represent whether this item has its attributes.
As a single axis represents multiple merged attributes, the resulting placement of objects with only a part of attributes yields an ambiguous representation.
Because of this, the main issues of this method arise.
Assigning a real value to an object is not consistent with the level of measurement of the underlying binary data.
The assigned value promotes the perception that an element has, compared to others, a stronger bond to some attributes, which is impossible.
Thus, a method that encourages such comparisons and results in such an inaccurate representation of the original information is, in our opinion, not valid.
An example of such a PCA projection is on the right side of \cref{fig:3d}.
\begin{figure*}[ht]
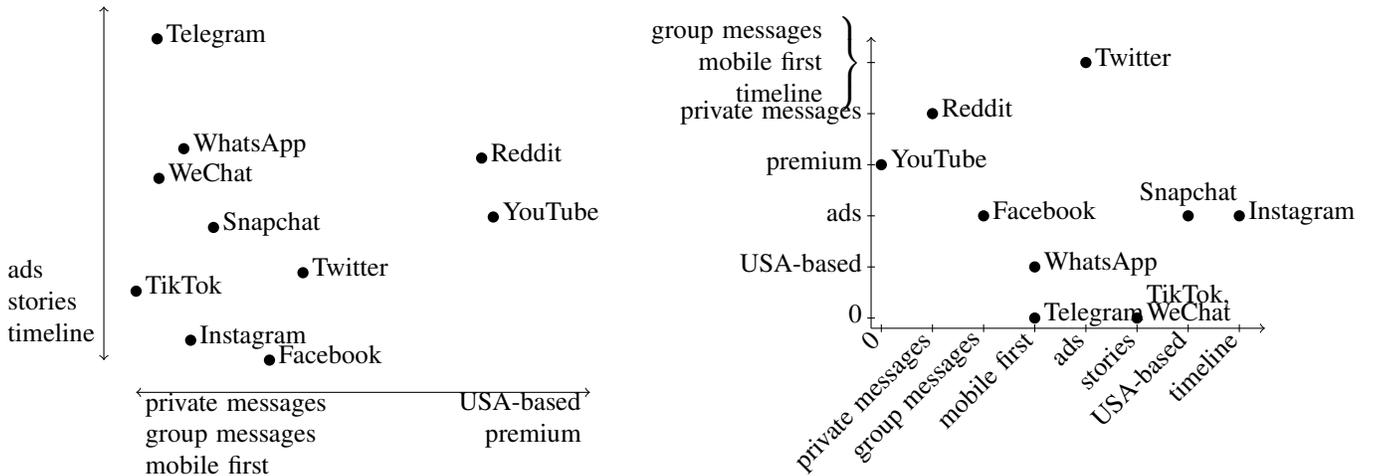

  \centering
  \includegraphics[height=18em]{tikz/pca.tikz}%
  \hfill
  \includegraphics[height=18em]{tikz/2d.tikz}%
  \caption{Left: A 2-dimensional embedding of the objects using principal component analysis. Right: An ordinal factorization restricted to the two largest factors for improved readability. All incidences can be deduced from the right projection except: (TikTok, timeline), (Whatsapp, stories), (Facebook, timeline), (YouTube, stories), (Facebook, stories). The ordinal projection does not contain false data.}
  \label{fig:3d}
\end{figure*}

To tackle this problem, Ganter and Glodeanu propose in~\cite{Ganter.2012} the \emph{ordinal factor analysis}, a method that does not use real-valued measurement on the binary attributes.
The main idea is to condense multiple attributes into a single factor, similar to principal component analysis.
The projection consists of linear orders of attributes, the so-called \emph{ordinal factors}.
Then, the method assigns each object, based on its attributes, a position in every factor.
Compared to the PCA approach, the positions assigned in the process are natural numbers instead of real-valued ones.
Thus, the resulting projection does not express inaccurate or even incorrect information.
A complete ordinal factorization furthermore allows deducing all original information.
Similar to the PCA projection, there is a visualization method for small datasets.
Thereby one places each object in a two-dimensional coordinate system at the last position for each axis such that it has all attributes until this position.
\cref{fig:3d} compares such a visualization to a PCA-projection.
The main advantage of the ordinal factor plot compared to the principal component analysis is that all its information is guaranteed to be correct.
For example, Facebook and Instagram both have ads and are USA-based, which is depicted in the vertical axis.
On the other hand, the principal component analysis projection gives the impression that Instagram has more ads than Snapchat and both of them are not USA-based, which are both false claims.

However, Ganter and Glodeanu neither provide a method for the computation of ordinal factorizations nor does the visualization method transfer to larger datasets.
Yet, in our opinion, this method is especially suited to handle larger datasets.
Still, for those, neither a method to compute the factorizations nor a visualization method exists so far.
Furthermore, for the application of the method, it is still necessary to demonstrate how to use it to discover and explain structure in such large datasets.
Here is the point where we step in with our present work.
We propose \textsc{OrdiFIND}, a greedy algorithm to compute a complete ordinal factorization of a dataset leveraging fast algorithms from formal concept analysis.
Furthermore, we extend the existing theory of ordinal factor analysis into a tool to explain relationships in large binary datasets.
Then we demonstrate how to discover knowledge in them using ordinal factor analysis by investigating different datasets.
Thus, our research question condenses to the following:
\textbf{Is it possible to compute ordinal factorizations of large datasets and use them to discover relationships in the data?}

\section{Related Work}
\label{sec:related-work}

We discuss work related to our topic in three different domains.
First, we discuss general tagging systems that generate binary data.
Then, we consider methods that represent attributes in a low number of dimensions.
In the final section, we discuss work done in formal concept analysis, which is the realm in which our work is positioned.

\subsection{Tagging, Categories and Keywords}

In the modern world, data often appear in large, unorganized chunks.
Thus, tags, categories, or keywords are assigned to items to tackle the problem of the sheer amount of data.
These allow the items to be filtered.
The transition between those concepts is seamless, as they are often used interchangeably.
Frequently, a community defines the tags on the items, which is a process commonly referred to as social tagging.
An extensive survey on social tagging techniques is published in~\cite{Gupta.2010}.
\cite{Nov.2008}~is a study examining the reasons behind tagging.
In \cite{Heckner.2008}, various types of tags created by users are analyzed and compared.
A method to automatically extract topics from a paper corpus is described in \cite{Onan.2019}
The automatic annotation of images in the domain of computer vision is explored in \cite{Kalayeh.2014}.

\subsection{Methods to Embed (Binary) Data}
\label{sec:other-methods-embed}

The two main reasons for the research of embedding high dimensional data into a lower number of dimensions are the following.
First, embeddings are applied to escape the so-called curse of dimensionality~\cite{Bellman.1966}, which is a phenomenon that causes distances to become less meaningful in higher dimensional spaces.
Secondly, it provides a way to better visualize~\cite{Hoffman.2001} complex data, as humans are better adapted to understand relationships in lower-dimensional spaces.
For an extensive survey on dimensional reduction methods, we refer the reader to~\cite{Espadoto.2021}.
A commonly applied approach for embedding high dimensional data into lower dimensions is the principal component analysis~\cite{Pearson.1901}, which is a method that minimizes the average squared distances from the data points to a line.
It is often confused~\cite{Jain.2013} with exploratory factor analysis~\cite{Child.1990}, a technique that allows exploring a dataset by reconstructing underlying factors.

\subsection{Factor Analysis in Formal Concept Analysis}
\label{sec:form-conc-analys}

Formal concept analysis is a research field of mathematical data analysis introduced by Wille in 1982~\cite{Wille.1982}.
For a comprehensive overview of the theory and foundations of formal concept analysis, we refer the reader to \cite{fca-book}.
The canonical way to depict information in formal concept analysis is the concept lattice.
Still, some approaches arrange data differently to make it more accessible.
One of them is the Boolean factor analysis, which was developed in~\cite{Keprt.2004, Keprt.2006, Belohlavek.2007, Belohlavek.2010, Boeck.1988}.
In~\cite{Ganter.2012} introduces the ordinal factor analysis.
Applications of the method are demonstrated in~\cite{Glodeanu.2013} where it is applied to some smaller medical datasets.
In \cite{Glodeanu.2013_2} the theory is lifted to the case of triadic incidence relations and in \cite{Glodeanu.2014} into the setting of fuzzy formal contexts.

\section{Basics for Ordinal Factor Analysis}
\label{sec:previous-work}

As formal concept analysis is the realm of ordinal factor analysis, we start this section with a brief introduction to FCA.
Then, we introduce the Boolean factor analysis, which gives rise to the ordinal factor analysis.
The running example that we use to explain the theory of this section is the dataset in \cref{fig:dataset} about different features and attributes of various social media platforms.
\begin{figure}[b]
  \centering
  \begin{cxt}
    \atr{USA-based}
    \atr{premium}
    \atr{ads}
    \atr{private messages}
    \atr{group messages}
    \atr{mobile first}
    \atr{stories}
    \atr{timeline}
    \obj{x.xxx.xx}{Facebook}
    \obj{x.xxxxxx}{Instagram}
    \obj{xxxx....}{Reddit}
    \obj{x.xxxxx.}{Snapchat}
    \obj{...xxx..}{Telegram}
    \obj{..xxxxxx}{TikTok}
    \obj{xxxxxx.x}{Twitter}
    \obj{..xxxxx.}{WeChat}
    \obj{x..xxxx.}{WhatsApp}
    \obj{xxx...x.}{YouTube}
  \end{cxt}
  \caption{Running example: This dataset compares attributes of different social media platforms.}
  \label{fig:dataset}
\end{figure}

\subsection{Formal Concept Analysis}
\label{sec:form-conc-analys-1}

The notations used in this work are the standard ones notations adapted from~\cite{fca-book}.
A \emph{formal context} is a triple $(G,M,I)$ consisting of an \emph{object set} $G$ and an \emph{attribute set} $M$ together with an incidence relation $I\subseteq G \times M$.
For arbitrary subsets $A\subset G$ and $B\subset M$ define the two \emph{derivation operators} as
\begin{align*}
  A' & \coloneqq \{m \in M \mid \forall g \in A: (g,m)\in I\}, \\
  B' & \coloneqq \{g \in G \mid \forall m \in B: (g,m)\in I\}.
\end{align*}
For a single attribute $m$ or object $g$, we write short $m'\coloneqq \{m\}'$ and $g'\coloneqq \{g\}'$.
A pair $(A,B)$ with $A\subset G$ and $B\subset M$ such that $A'=B$ and $B'=A$ is called a \emph{formal concept} of $(G,M,I)$.
The partial order relation
\begin{align*}
  (A_1,B_1)\leq (A_2,B_2):\iff A_1 \subseteq A_2\quad (\iff B_1 \supseteq B_2).
\end{align*}
orders all formal concepts of a context.
The set of all concepts with this order relation is denoted by $(\mathfrak{B},\leq)$.
A subset of concepts where all concepts are comparable with each other is called a \emph{chain}.
The order has a \emph{lattice} structure, i.e., for each pair of concepts, a unique supremum and a unique infimum exists.
The emerging so-called \emph{concept lattice} is used in formal concept analysis to analyze the structure and relationships in a formal context.
Two distinct concepts $(A_1,B_1)$ and $(A_2,B_2)$ are in \emph{covering relation}, if $(A_1,B_1) \leq (A_2,B_2)$ and there is no other concept $(A_3,B_3)$ with $(A_1,B_1) \leq (A_3,B_3) \leq (A_2,B_2)$.
It is then denoted by $(A_1,B_1) \prec (A_2,B_2)$.
An \emph{order diagram} graphically visualize the concept lattice.
In this diagram, nodes on the plane depict the concepts, whereby they are places such that smaller concept nodes are below the greater ones.
Continuous straight lines connect concept pairs in covering relation.
The greatest concept to contain a specific attribute is annotated with its name, while symmetrically, the smallest one to include an object has the annotation of this object name.
The attributes are annotated above the concept dot, while object labels are below the node.
By this construction, it is possible to reconstruct all information of the formal context from the concept lattice diagram.
\cref{fig:lattice} depicts the concept lattice for our running example.
\begin{figure}[t]
  \centering
  \includegraphics[width=\linewidth]{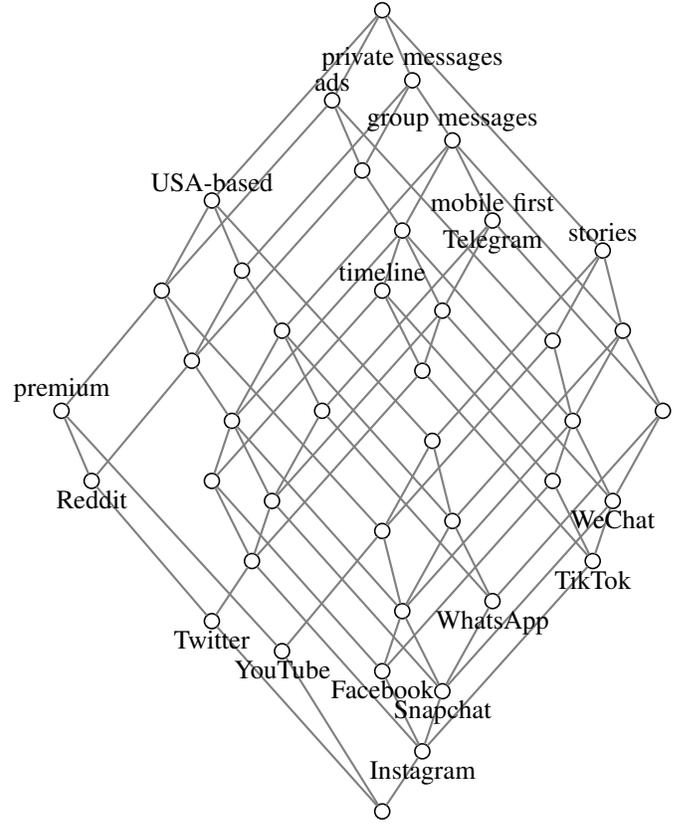}
  \caption{The order diagram of the concept lattice for the running example depicts its 41 concepts.}
  \label{fig:lattice}
\end{figure}

\subsection{Boolean Factor Analysis}
\label{sec:bool-fact-analys}
\begin{figure*}[!ht]
  \begin{minipage}[t]{0.7\linewidth}
    \strut\vspace*{-\baselineskip}\newline
    \scalebox{0.85}{
      \vbox{
        \begin{cxt}
          \att{f1}
          \att{f2}
          \att{f3}
          \att{f4}
          \att{f5}
          \att{f6}
          \att{f7}
          \att{f8}
          \att{f9}
          \att{f10}
          \att{f11}
          \att{f12}
          \att{f13}
          \att{f14}
          \att{f15}
          \att{f16}
          \att{f17}
          \obj{...x..x.x.xx.xxxx}{1}
          \obj{x.xx..xxxxxxxxxxx}{2}
          \obj{....xx.....x..x.x}{3}
          \obj{..x....xxxxxxxxxx}{4}
          \obj{............x..xx}{5}
          \obj{......xxxxx.xx.xx}{6}
          \obj{.x..xx...x.xx.xxx}{7}
          \obj{.......xxxx.xx.xx}{8}
          \obj{............xxxxx}{9}
          \obj{.....x....xx.xx..}{10}
        \end{cxt}}
    }

    \bigskip

    \begin{tabular}{ll}
      Objects:    & (1) Facebook
      (2) Instagram
      (3) Reddit
      (4) Snapchat
      (5) Telegram                \\&
      (6) TikTok
      (7) Twitter
      (8) WeChat
      (9) WhatsApp
      (10) YouTube
      \bigskip                    \\
      Attributes: & (a) USA-based
      (b) premium
      (c) ads
      (d) private messages        \\&
      (e) group messages
      (f) mobile first
      (g) stories
      (h) timeline
    \end{tabular}

  \end{minipage}%
  \hfill
  \begin{minipage}[t]{0.3\linewidth}
    \strut\vspace*{-\baselineskip}\newline
    \scalebox{0.85}{
      \vbox{
        \begin{cxt}
          \att{a}
          \att{b}
          \att{c}
          \att{d}
          \att{e}
          \att{f}
          \att{g}
          \att{h}
          \obj{x.xxxxxx}{f1}
          \obj{xxxxxx.x}{f2}
          \obj{x.xxxxx.}{f3}
          \obj{x.xxx.xx}{f4}
          \obj{xxxx....}{f5}
          \obj{xxx.....}{f6}
          \obj{..xxx.xx}{f7}
          \obj{..xxxxx.}{f8}
          \obj{..xxx.x.}{f9}
          \obj{..xxxx..}{f10}
          \obj{..x...x.}{f11}
          \obj{x.x.....}{f12}
          \obj{...xxx..}{f13}
          \obj{......x.}{f14}
          \obj{x.......}{f15}
          \obj{...xx...}{f16}
          \obj{...x....}{f17}
        \end{cxt}}}
  \end{minipage}
  \caption{A conceptual Boolean factorization of the dataset from \cref{fig:dataset} using 17 Boolean factors.}
  \label{fig:boolean_faktor}
\end{figure*}
Research on Boolean factor analysis is conducted within~\cite{Keprt.2004, Keprt.2006, Belohlavek.2007, Belohlavek.2010, Boeck.1988} and outside~\cite{Zhang.2007} the realm of formal concept analysis.
For this work, we use the notations as they are used in formal concept analysis.
A formal context $\context=\GMI$ can be disassembled into two \emph{factorizing formal contexts} $(G,F,I_{GF})$ and $(F,M,I_{FM})$ such that
\begin{align*}
  (g,m) \in I \iff (g,f) \in I_{GF} \text{ and } (f,m) \in I_{FM}
\end{align*}
for some $f \in F$.
Elements in $F$ are called \emph{Boolean factors}.
For each factor $f\in F$, let the \emph{factorizing family} be the pair $(A,B)$ with $A = \{g\mid (g,f) \in I_{GF}\}$ and $B = \{m\mid (f,m) \in I_{FM}\}$.
Then, for each object $g\in A$ and each attribute $m \in B$ it holds that  $(g,m) \in I$.
Thus, it is sensible and possible to extend each factor such that its factorizing family is a formal concept.
The emerging factorization is called a \emph{conceptual factorization}.
A Boolean factorization with few factors can have two benefits.
It compresses the size of a dataset and increases its understandability.
However, the existence of a small factorization for a formal context is not guaranteed.
Contrarily, for a given $k$ deciding if a factorization into $k$ factors exists is an $NP$-complete task~\cite{Belohlavek.2010}.

For our running example, there exists a conceptual factorization using 17 factors as depicted in \cref{fig:boolean_faktor}.

\subsection{Ordinal Factor Analysis}
\label{sec:ordin-fact-analys}

To overcome the shortcomings of large Boolean factorizations, Ganter and Glodeanu propose in~\cite{Ganter.2012} to group multiple Boolean factors into a single factor as follows.
For a given formal context $\context=(G,M,I)$ with a pair of factorizing contexts $(G,F,I_{GF})$ and $(F,M,I_{FM})$ a set $E\subseteq F$ is called a \emph{many-valued} factor of $(G,M,I)$.
An \emph{ordinal factor} is then defined as a many-valued ordinal factor $E$ where for all elements $e_1,e_2\in E$ with factorizing families $(A_1, B_1)$ and $(A_2, B_2)$ it holds that $A_1 \subset A_2$ or $A_2 \subset A_1$.
A \emph{complete ordinal factorization} of width $k$ of a formal context $\context=(G,M,I)$ is a set of ordinal factors $F_1,\ldots F_k$ such that $\bigcup_{i=1}^k F_i = I$.
Similar to the case of Boolean factors, it is possible to extend each ordinal factor to a chain of formal concepts.
Such an ordinal factorization is called a \emph{conceptual ordinal factorization}.

\begin{figure}[!hb]
  \centering
  \includegraphics[width=\linewidth]{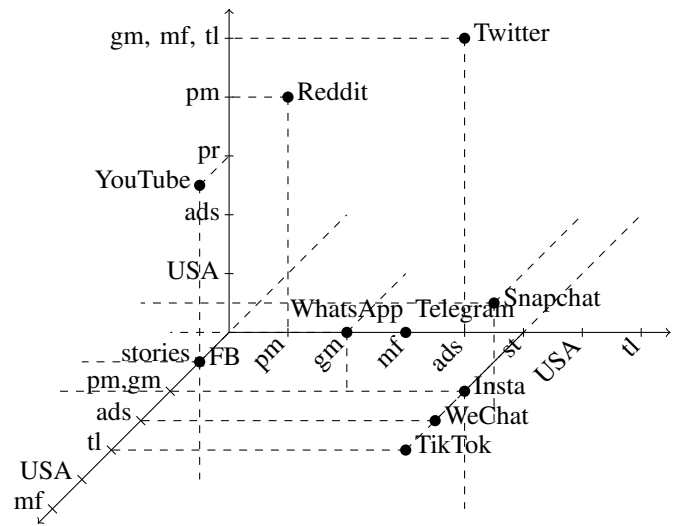}
  \caption{A complete ordinal factorization of the dataset from \cref{fig:dataset} into three factors.}
\end{figure}
Ordinal factors are highly related to Ferrers relations.
A Ferrers relation $F$ in the context $(G,M,I)$ is a subset of $G \times M$ where for all $g,h \in G$ and $m,n \in M$ it holds that
\begin{align*}
  (g,m) \in I \land (h,n)\in I \Rightarrow (g,n) \in I \lor (h,m) \in I.
\end{align*}
We call a Ferrers relation $F$ maximal in a formal context $(G,M,I)$ if $F \subset I$ and there is no Ferrers relation $F'\subset I$ with $F \subsetneq F'$.
The maximal Ferrers relations correspond 1-to-1 to the maximal chains of the concept lattice, which describe the maximal conceptual ordinal factors.
Because of this duality, we refer to both, maximal Ferrers relations as well as maximal chains of formal concepts as \emph{maximal ordinal factors}.
We call the \emph{size of an ordinal factor} the number of incidences contained in its Ferrers relation.

Consider once again the running example.
It is not possible to do a factorization with two factors because no two of the three elements (YouTube, premium), (WhatsApp, mobile-first) and (Facebook, timeline) can appear in the same Ferrers relation.
On the other hand, it is possible to generate a complete ordinal factorization into three factors as follows:

\begin{enumerate}
  \item $f17 < f16 < f13 < f10 < f8 < f3 < f1$
  \item $f15 < f12 < f6 < f5 < f2$
  \item $f14 < f11 < f9 < f7 < f4 < f1$
\end{enumerate}

Note that every factor from the conceptual Boolean factorization in \cref{fig:boolean_faktor} appears in at least one ordinal factor, and this is thus a complete ordinal factorization.

Ganter and Glodeau propose a coordinate system to depict the information of the dataset using ordinal factorizations.
Every axis of this plot represents a single ordinal factor as follows.
Each tick of such an axis is associated with a concept of the concept chain.
The label for the tick contains the additional attributes that this concept gains compared to the previous tick's.
The plot depicts the objects in each axis such that they are one tick before the first tick with an attribute they do not have.
For our given factorization of the running example, the complete factorization would result in a three-dimensional coordinate system.
Presenting a more complex dataset with three ordinal factors on a plane results in hard to read projections as \cref{fig:3d} demonstrates.
Thus, restricting such a plot to the two largest ordinal factors is sensible.
However, with this method, the information of the third factor is lost.
\Cref{fig:3d} compares an ordinal factor analysis plot to a principal component analysis one.

\section{Greedy Ordinal Factorizations are Hard}
\label{sec:comp-greedy-ordin}

In the following two sections, we introduce and investigate greedy ordinal factorizations.
\begin{definition}
  Let $(G,M,I)$ be a formal context.
  A set of ordinal factors $F_1,\ldots,F_k$ is called greedy, if for each $i \in \{1,\ldots,k\}$ there is no factor $\tilde{F_i}$ with $|\tilde{F_i}\setminus\{F_1 \cup \cdots \cup F_{i-1}\}|> |F_i\setminus\{F_1 \cup \cdots \cup F_{i-1}\}|$.
\end{definition}
Repeatedly computing a Ferrer's relation, which covers the maximum possible uncovered part of the incidence, results in such a factorization.
A complete ordinal factorization arises by repeating this process until every part of the incidence relation is covered.
Still, it turns out that even deciding on the size of the first ordinal factor of this greedy process is an $NP$-complete problem.
To see this, consider an auxiliary Lemma from Yannakakis~\cite{Yannakakis.1981}.
This Lemma refers to the bipartite graphs corresponding to Ferrer's relations as chain graphs.
The to notions of this work adapted version is the following.
\begin{lemma}
  Let $\context=(G,M,I)$ be a formal context and $k$ a natural number.
  It is $NP$-complete to decide whether there is a set $\tilde{I} \subset G \times M$ of size $k$  such that $I\cup \tilde{I}$ is a Ferrers relation.
\end{lemma}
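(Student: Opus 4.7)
The plan is to prove both membership in NP and NP-hardness, with the latter being essentially a translation of Yannakakis's result for chain graph completion into the language of formal concept analysis.

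For membership in NP, I would take $\tilde{I}$ itself as the certificate. Given $\tilde{I}\subset G\times M$, one can check in time polynomial in $|G|\cdot|M|$ that $|\tilde{I}|=k$ and that $J:=I\cup\tilde{I}$ is a Ferrers relation, simply by iterating over all quadruples $(g,h,m,n)\in G^2\times M^2$ and verifying the defining implication $(g,m)\in J\wedge(h,n)\in J\Rightarrow (g,n)\in J\vee(h,m)\in J$. This verification runs in $O(|G|^2|M|^2)$ time.

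For NP-hardness, the key observation is that Ferrers relations in $(G,M,I)$ correspond exactly to bipartite \emph{chain graphs} on the vertex bipartition $(G,M)$: the Ferrers condition is precisely the statement that the neighborhoods of vertices on one side form a chain under inclusion, which characterizes chain graphs. Under this dictionary, the problem stated here is literally the chain graph completion problem: given a bipartite graph with edge set $I$, decide whether $k$ extra edges suffice to turn it into a chain graph. Yannakakis~\cite{Yannakakis.1981} proved that this chain graph completion problem is NP-complete. Therefore I would simply invoke this result, after spelling out the correspondence between the two formulations so that an instance of chain graph completion of parameter $k$ is, up to renaming, an instance of our problem with the same $k$.

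The main obstacle, then, is not really a mathematical difficulty but a bookkeeping one: making sure the translation between ``maximal chain graph'' and ``maximal Ferrers relation'' matches Yannakakis's convention, and that the reduction preserves the parameter $k$ exactly so that hardness transfers without modification. Since the excerpt already identifies Ferrers relations with chain graphs and cites Yannakakis as the source of the auxiliary lemma, the proof itself can be kept to a few lines, essentially saying: ``NP-membership is by direct verification; NP-hardness is Yannakakis's theorem under the correspondence between Ferrers relations and chain graphs.''
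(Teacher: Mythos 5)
Your proposal is correct and matches the paper's treatment: the paper gives no proof of this lemma at all, but simply imports it from Yannakakis's NP-completeness result for chain graph completion, relying on exactly the correspondence between Ferrers relations and chain graphs that you spell out. Your added NP-membership check by direct verification of the Ferrers condition over all quadruples is routine and unobjectionable.
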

The complement of a Ferrers relation is once again a Ferrers relation.
Thus, it follows immediately that deciding on the number of pairs that have to be removed from a binary relation to make it a Ferrers relation is equally computationally complex.
\begin{corollary}
  Let $\context=(G,M,I)$ be a formal context and $k$ a natural number.
  It is $NP$-complete to decide whether there is a set $\tilde{I} \subset I$ of size $k$  such that $I\setminus \tilde{I}$ is a Ferrers relation.
\end{corollary}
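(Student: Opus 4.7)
The plan is to establish NP-hardness by a direct reduction from the lemma, and to handle membership in NP by inspection, exploiting the self-duality of Ferrers relations under complementation.

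First, I will observe that the problem is in NP: a witness is a set $\tilde{I} \subset I$ with $|\tilde{I}| = k$, and it is routine to verify in polynomial time that $I \setminus \tilde{I}$ satisfies the Ferrers condition by iterating over all 4-tuples $(g,h,m,n) \in G^2 \times M^2$. So the substance of the proof is NP-hardness, which is what the remark following the statement alludes to.

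For NP-hardness, I would first prove the key fact already asserted in the text: if $F \subset G \times M$ is a Ferrers relation, then so is its complement $F^c \coloneqq (G \times M) \setminus F$. This is a short calculation by contraposition. The Ferrers condition for $F^c$ says $(g,m) \in F^c \land (h,n) \in F^c \Rightarrow (g,n) \in F^c \lor (h,m) \in F^c$; negating this, a witness to failure would be $(g,n),(h,m) \in F$ with $(g,m),(h,n) \notin F$, which simply swaps the roles of the object pairs and is precisely forbidden by the Ferrers condition on $F$. Thus complementation sends Ferrers relations to Ferrers relations bijectively.

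Now I would give the reduction. Given an instance $(G,M,I,k)$ of the problem from the lemma, output the instance $(G,M,I^c,k)$ of the corollary's problem, where $I^c = (G \times M) \setminus I$. For correctness, note that we may restrict attention in the lemma to sets $\tilde{I}$ disjoint from $I$ (otherwise replace $\tilde{I}$ by $\tilde{I} \setminus I$ without affecting $I \cup \tilde{I}$ or decreasing $|\tilde I|$, so any yes-instance has a witness of this form). For such $\tilde I \subset I^c$, we have $(I \cup \tilde{I})^c = I^c \setminus \tilde{I}$, and by the previous paragraph $I \cup \tilde I$ is Ferrers if and only if $I^c \setminus \tilde I$ is Ferrers. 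Hence $(G,M,I,k)$ is a yes-instance of the lemma exactly when $(G,M,I^c,k)$ is a yes-instance of the corollary, completing the polynomial-time reduction.

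The main obstacle, such as it is, is just being careful with the disjointness of $\tilde I$ and $I$ in the reduction, since the lemma allows an arbitrary $\tilde I \subset G \times M$; this is why the small WLOG step above is needed. Everything else is immediate from the lemma and the complementation symmetry.
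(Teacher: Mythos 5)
Your proposal is correct and follows essentially the same route as the paper: the key observation that complementation preserves the Ferrers property, followed by reducing the lemma's problem on $(G,M,I)$ to the removal problem on the complement context. You are in fact somewhat more careful than the paper, which omits the NP-membership check and the small WLOG step restricting to $\tilde{I}$ disjoint from $I$ (though note that replacing $\tilde{I}$ by $\tilde{I}\setminus I$ \emph{can} shrink it, which is harmless under the usual ``at most $k$'' reading of the decision problem).
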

\begin{proof}
  To see this, one has to show that the complement of a Ferrer's relation is once again a Ferrer's relation.
  Assume not, i.e. there is a Ferrer's relation  $F \subset G \times M$ such that its complement $(G\times M)\setminus F$ is no Ferrer's relation.
  Then there has to be $g,h \in G$ and $m,n\in M$ with $(g,m) \not\in F$ and $(h,n)\not\in F$ for which $(g,n) \in F$ and $ (h,m) \in F$, a contradiction to $F$ being a Ferrer's relation.
  Now let $(G,M,I)$ be a context and $\tilde{I} \subset G \times M$.
  Then $(G\times M)\setminus (I \cup \tilde{I})$ is a Ferrers relation exactly when $I\setminus \tilde{I}$ also is a Ferrers relation, which completes the proof.
\end{proof}

Thus, even deciding if the first greedy factor has a given size $k$ is an $NP$-complete task which makes the problem of computing a greedy ordinal factorization $NP$-hard.

\section{Computing Greedy Ordinal Factorizations}
In this section, we provide \textsc{OrdiFIND} (Algorithm for \textsc{Ord}inal \textsc{F}actors \textsc{in} Binary \textsc{D}ata), an algorithm that to compute greedy ordinal factorization.
As discussed in the last section, this problem is of high computation complexity.
Thus, the best algorithm we can hope to find will still have exponential time with respect to the input size.
Still, it is possible to leverage fast concept lattice algorithms from formal concept analysis.
Especially Lindig's algorithm~\cite{Lindig.2000} which can be equipped with speed-up techniques~\cite{Krajca.2021} is useful.
This algorithm computes the covering relation of a formal context in a reasonable time.
The main advantage of this approach is that the exponential-time task of computing the concept lattice is executed only once.
After this, the algorithm has linear time complexity in the size of the covering relation of the concept lattice.
Note that the size of this covering relation may still be exponential in the size of the input data.

\begin{algorithm}[t]
  \raggedright
  \begin{tabular}{ll}
    \textbf{Input:}  & Concept lattice with covering relation $({\mathfrak B},\prec)$ \\
                     & Covered $E$                                                    \\
    \textbf{Output:} & Maximal Ferrers Relation $F$
  \end{tabular}
  \vspace*{0.3em}
  \hrule
  \begin{lstlisting}
$F_{(A,B)} \coloneqq \emptyset$ $\forall (A,B) \in {\mathfrak B}$
$L \coloneqq linear\_extension({\mathfrak B},\prec)$
for $(A_1,B_1)$ in $L$:
    for $(A_2,B_2) \in {\mathfrak B}$ with $(A_2,B_2) \prec (A_1,B_1)$:
        $\widetilde{F_{(A_1,B_1)}} = F_{(A_2,B_2)} \cup (A_1 \times B_1)$
        if $|\widetilde{F_{(A_1,B_1)}} \setminus E|\geq|F_{(A_1,B_1)} \setminus E|$:
            $F_{(A_1,B_1)} = \widetilde{F_{(A_1,B_1)}}$
return $F_{\max({\mathfrak B},\prec )}$
\end{lstlisting}
  \caption{Maximal Ferrers Relation}
  \label{alg1}
\end{algorithm}

The routine described in \Cref{alg1} lays the foundation for \textsc{OrdiFIND}.
It takes the covering relation of a concept lattice and a set  $E$ as its input.
The routine then computes an ordinal factor that covers a maximal number of incidences in  $I\setminus E$.
Thus, the output of the routine is an ordinal factor $F$, in its Ferrers relation form, such that there is no ordinal factor $\tilde{F}$ with $|\tilde{F} \setminus E| > |F\setminus E|$.
The main idea of the routine is the following.
The concept lattice of a formal context is iterated from the bottom to the top in order of a linear extension, i.e., in a linear order compatible with the concept lattice.
Thus, each concept is not processed before the same process finishes for all smaller ones.
After each concept is processed, the ordinal factor covering a maximal number of incidences of $I\setminus E$ with this specific concept $(A, B)$ as a top concept is in the variable $F_{(A, B)}$.
Therefore, after every element is processed, the variable corresponds to the top concept $F_{\max (\mathfrak B, \prec)}$ contains the factor excluding with the highest number of incidences from $I\setminus E$.
For a set of already computed factors, it is possible to compute the ordinal factor containing a maximal number of uncovered incidences by choosing the union of those factors as $E$.
Iteratively repeating this procedure thus results in a complete greedy factorization of the formal context as described in \cref{alg2}.
\begin{algorithm}[b]
  \caption{Naive Complete Ordinal Factorization}
  \label{alg2}
  \raggedright
  \begin{tabular}{ll}
    \textbf{Input:}  & Formal context $(G,M,I)$                      \\
                     & Covering relation $({\mathfrak B},\prec)$     \\
    \textbf{Output:} & Greedy ordinal factorization $F_1,\ldots,F_k$
  \end{tabular}
  \vspace*{0.3em}
  \hrule

  \begin{lstlisting}
$E = \emptyset$
$i = 1$
while $E \neq I$:
    $F_i = max\_ferrers(({\mathfrak B},\prec),E)$
    $E = E \cup F_i$
    $i = i+1$
return $F_1,\ldots ,F_i$
\end{lstlisting}
\end{algorithm}

Consequently, the following lemma concerning the overall runtime of the algorithm follows.
\begin{lemma}
  For a formal context with $k$ concepts and given covering relation, it is possible to compute a greedy ordinal factorization consisting of $r$ ordinal factors in $O(rk^2)$.
\end{lemma}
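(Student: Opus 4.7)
The plan is to bound the total running time of Algorithm 2 as $r$ invocations of the max\_ferrers subroutine (Algorithm 1) plus bookkeeping between calls, and then show that each invocation runs in $O(k^2)$ time. Since Algorithm 2 terminates after $r$ iterations of the outer \textbf{while}-loop by assumption, and each iteration performs one call of max\_ferrers together with a set-union to update $E$, the whole scheme costs $r\cdot(T_{\text{max\_ferrers}} + T_{\text{update}})$, so it suffices to bound both quantities by $O(k^2)$.

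For a single call of max\_ferrers I would analyse the three phases. Computing a linear extension of $(\mathfrak{B},\prec)$ by topological sort costs $O(k+|\prec|)$, and the cover relation of any lattice on $k$ elements has $|\prec|\leq \binom{k}{2} = O(k^2)$, giving $O(k^2)$. The outer loop runs through each of the $k$ concepts exactly once; the inner loop over lower covers of the current concept, summed across all outer iterations, contributes exactly $|\prec| = O(k^2)$ steps in total. So the only remaining obligation is to show that each inner-loop body runs in amortized constant time.

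To achieve the $O(1)$ per inner step I would represent each $F_{(A,B)}$ implicitly: at concept $(A,B)$ store only an integer score $s_{(A,B)} = |F_{(A,B)}\setminus E|$ and a pointer to the predecessor concept in the currently best chain ending at $(A,B)$. The comparison in the algorithm then becomes arithmetic on integers, provided we can evaluate the candidate score $s_{(A_2,B_2)} + \delta\!\left((A_1,B_1),(A_2,B_2)\right)$ in $O(1)$, where $\delta$ counts pairs in $(A_1\times B_1)\setminus(F_{(A_2,B_2)}\cup E)$. I would precompute, once at the start of each max\_ferrers call, the quantity $|(A\times B)\setminus E|$ for every concept; and I would exploit the staircase structure of a Ferrers chain, where $A_2\subseteq A_1$ and $B_1\subseteq B_2$, so that the marginal contribution $\delta$ reduces to arithmetic on the precomputed rectangle sizes of the concepts in the chain ending at $(A_2,B_2)$. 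The actual Ferrers relation is reconstructed only after the main loop terminates, by chasing the predecessor pointers from $(G,G')$, costing an extra $O(k)$.

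The main obstacle will be carrying out this $O(1)$ marginal-gain computation without hiding a factor of $|G|\cdot|M|$ inside set operations; this is where the proof could become notationally heavy if one literally materialises each $\widetilde{F_{(A_1,B_1)}}$ as a relation. With the implicit representation and the precomputation of rectangle intersections with $E$, each inner step is $O(1)$, a single max\_ferrers call is $O(k^2)$, and the per-iteration update $E\leftarrow E\cup F_i$ in Algorithm 2 is also $O(k^2)$ (its incidences form a chain of $O(k)$ rectangles). Multiplying by the $r$ outer iterations yields the claimed $O(rk^2)$ bound.
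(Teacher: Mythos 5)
Your proof follows the same decomposition as the paper's: each invocation of the max-Ferrers routine is charged to the covering relation, whose size is at most $\binom{k}{2}=O(k^2)$, and the outer loop runs $r$ times. The paper's three-sentence proof silently treats each covering-relation step as unit cost even though Algorithm~1 as written manipulates explicit incidence sets; your additional care in replacing those sets by integer scores and predecessor pointers (which is exactly the implicit representation the paper later adopts in \textsc{OrdiFIND}) fills in that assumption rather than diverging from the argument.
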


\begin{proof}
  The runtime of \cref{alg1} is bounded from above by the size of the covering relation.
  The covering relation is bounded from above by the number of concept pairs.
  The loop in \cref{alg2} is repeated $r$ times.
\end{proof}

The algorithm described above computes a greedy ordinal factorization.
For the usefulness of this algorithm, it is crucial to investigate how good the factorization is, i.e., how it performs compared to an ordinal factorization that minimizes the number of factors.
We investigate this in the following Lemma.

\begin{lemma}
  Let $\context=(G,M,I)$ that can be optimally decomposed into $k$ ordinal factors.
  Let $F_1,\ldots,F_r$ be a greedy ordinal factor decomposition.
  Then $r \leq k \log_e |I|$.
\end{lemma}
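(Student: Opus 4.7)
The plan is to adapt the classical greedy set cover analysis, with $I$ playing the role of the universe and the Ferrers relations contained in $I$ playing the role of candidate sets. The optimal decomposition $O_1,\ldots,O_k$ is then nothing but a cover of $I$ by $k$ such candidates.

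First I would introduce the potential $u_i \coloneqq |I \setminus (F_1 \cup \cdots \cup F_i)|$, so that $u_0 = |I|$ and $u_r = 0$ by definition of a complete factorization. The heart of the proof is the one-step shrinkage inequality $u_i \leq (1 - 1/k)\, u_{i-1}$. To obtain this, I would argue by pigeonhole against the optimum: since $O_1 \cup \cdots \cup O_k = I$, the same $k$ Ferrers relations cover in particular the $u_{i-1}$ still-uncovered incidences, so some $O_j$ contributes at least $u_{i-1}/k$ uncovered elements. Because $O_j \subseteq I$ is itself a Ferrers relation, it is a legitimate candidate for the greedy selection at step $i$; the greedy choice $F_i$ maximises $|F_i \setminus (F_1 \cup \cdots \cup F_{i-1})|$, hence covers at least as many new incidences as $O_j$, which gives the claimed bound on $u_i$.

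Iterating the shrinkage yields $u_t \leq |I|(1-1/k)^t$. Combined with the elementary estimate $\ln(1-1/k) < -1/k$ (valid for $k \geq 1$), this is strictly less than $|I|\,e^{-t/k}$. Choosing $t = k \log_e |I|$ therefore forces $u_t < 1$, and since $u_t$ is a non-negative integer it must equal zero. Hence the greedy procedure has already terminated, which is precisely $r \leq k \log_e |I|$.

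The only real obstacle is the pigeonhole step, and it is only an obstacle in the sense that one must verify the transferability of the usual set-cover argument to the ordinal-factor setting: namely, that each $O_j$ remains a valid candidate in every greedy round regardless of what has been selected previously. This is immediate from the fact that $O_j$ is a Ferrers relation contained in $I$, a property which does not depend on the already covered part $E_{i-1}$. Everything else is the standard exponential-decay bookkeeping.
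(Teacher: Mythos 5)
Your proposal is correct and follows essentially the same route as the paper: a pigeonhole argument against the optimal $k$-factor decomposition to get the one-step shrinkage $u_i \leq (1-1/k)u_{i-1}$, followed by the standard $(1-1/k)^r \leq e^{-r/k}$ decay estimate forcing the uncovered count below~$1$. Your version is in fact slightly more careful than the paper's (which writes $|F_i|$ where it means the newly covered portion), but the underlying argument is identical.
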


\begin{proof}
  By pigeon-hole principle it holds that $|F_i|\geq (|I|-|F_1 \cup \ldots \cup F_{i-1}|)/k$
  for all $i \in \mathbb{N}$. Thus, for $r>k \log_e |I|$ it holds that
  $|I|-|F_1|-\cdots-|F_{r}|\leq |I|(1-\frac{1}{k})^r\leq |I|e^{-r/k} < 1$,
  implying that all incidences are covered after $r$ repetitions.
\end{proof}

The naive algorithm, proposed in \cref{alg1,alg2}, reprocesses each concept in every step for each computation of a new factor.
However, especially in the later and smaller factors, the algorithm does not modify all concepts in the lattice.
Identifying concepts not touched by the algorithm enables a speed-up technique as the value of those can be saved between the steps.
To do so, we have to modify how the naive algorithm stores the computed information.
Instead of saving all incidences of the ordinal factor at each concept in the lattice, we only store their size.
Additionally, we store for each concept in the factor its predecessor.
The factor is then extracted by trailing the predecessors, starting with the top concept.
Now, not every factor contains every attribute.
Thus, after a factor with missing attributes was computed, only concepts that have a greater concept with an attribute in it have to be recomputed.
The speed speed-up thus emerges from just recomputing those concepts.
To determine which concepts are to recompute, we store the attributes of the factor in some set $R$.
We use the set $R$ furthermore to determine which concepts to iterate next.
Instead of enumerating the concepts in the order of a previously fixed linear extension, the smallest attribute set of $R$ is always the next one to iterate.
Then it is removed from $R$.
To enable this extraction, we implement $R$ as a heap.
The size of the attribute sets of concepts increases for smaller ones
Thus, a linear order compatible with the concept lattice is respected.
The proposed algorithm \textsc{OrdiFIND} in \cref{alg:faster} uses the new ideas from this paragraph to speed up the first two algorithms.
\begin{algorithm}[t]
  \caption{OrdiFIND}
  \label{alg:faster}
  \raggedright
  \begin{tabular}{ll}
    \textbf{Input:}  & Formal context $(G,M,I)$                      \\
                     & Covering relation $({\mathfrak B},\prec)$     \\
    \textbf{Output:} & Greedy ordinal factorization $F_1,\ldots,F_k$
  \end{tabular}
  \vspace*{0.3em}
  \hrule

  \begin{lstlisting}
$E = \emptyset$
$i = 1$
$R = {\min({\mathfrak B} \prec)}$
while $E \neq I$:
    $b_{(A,B)} \coloneqq \bot$ $\forall (A,B) \in {\mathfrak B}$
    $s_{(A,B)} \coloneqq \emptyset$ $\forall (A,B) \in {\mathfrak B}$
    while $R \neq \emptyset$:
        $(A_1,B_1) = R.pop\_concept\_min\_no\_attr()$
        for $(A_2,B_2) \in {\mathfrak B}$ with $(A_1,B_1) \prec (A_2,B_2)$:
            $\widetilde{s_{(A_2,B_2)}} = s_{(A_1,B_1)} + |(A_2 \times B_2)\setminus E|$
            if $\widetilde{s_{(A_2,B_2)}} > s_{(A_2,B_2)}$:
                $s_{(A_2,B_2)} = \widetilde{s_{(A_2,B_2)}}$
                $b_{(A_2,B_2)} = (A_1,B_1)$
                $R = R \cup \{(A_2,B_2)\}$
    $F_i \coloneqq \emptyset$
    $b = {\max({\mathfrak B} \prec)}$
    while $b \neq \bot$:
        $F_i$ = $F_i \cup \{(A,B)\}$
        $b = b_{(A,B)}$
    $R = \{(g'',g') \mid g \in B, (A,B) \in F_i\}$
    $i = i + 1$
return $F_1,\ldots ,F_i$
\end{lstlisting}
\end{algorithm}

\section{Relationship between Factors and Objects}

Let $(G,M,I)$ be a formal context with an ordinal factorization $F_1, \ldots F_k$.
For a fixed factor $F_i=(A_1,B_1),\ldots,(A_j,B_j)$ and an object $g$ call the position of $g$ in $F_i$  the maximal $r$, such that $B_r \subset g'$.
Because $F_i$ is a chain of concepts, all attribute sets of the concepts $B_1\ldots B_r$ are thus subsets of $g'$.
If this is true, we say that the object $g$ \emph{supports the factor} $F_i$ until position $r$.
If $r = j$ the object $g$ \emph{supports the whole factor} $F_i$.

For two objects $g_1, g_2 \in G$ in a dataset, we can now define the \emph{ordinal factorization distance} as
\begin{align*}
  d(g_1, g_2) = |g_1'\setminus g_2'|.
\end{align*}
This distance between two objects thus counts the number of attributes that one object has, and the other one is missing.
Note that this distance is not symmetric and f $g_1' \subset g_2'$ it becomes 0.
The Hamming distance, which is the symmetric version of the distance defined above is commonly used to compute distances in binary datasets and can then be defined by
\begin{align*}
  d_h(g_1,g_2) = d(g_1,g_2)+d(g_2,g_1).
\end{align*}

We can generalize the notion of the ordinal factorization distance to positions in the factors.
To do so, we compute the number of attributes in a factorization based on these positions.
Then the number of missing attributes of the object is computed.
Let thus $(G,M,I)$ be a formal context with a factorization $F_1,\ldots, F_k$ and $p_1,\ldots,p_k$ integers representing the position in each factor.
Define the \emph{ordinal factorization distance of the object $g$ to the positions $p_1,\ldots,p_k$} as
\begin{align*}
  d_F(g,p_1,\ldots,p_k) = |\{m \mid m \in F_i \text{ before position $p_i$ }\}\setminus g'|.
\end{align*}
The distance thus counts how many attributes an item is missing support all factors until position $F_i$ until position $p_i$.
Note that it is possible to which the \emph{ordinal factorization distance} of an object is 0.
This is trivially true, if for $p_1 = \cdots = p_k = 0$.

\section{Ordinal Factorization as a Tool to Discover Structure}
\label{demo}
We see the ordinal factor analysis as a tool to explain and discover relationships in unordered datasets.
The computationally expensive task to compute the factorization only has to be done once in the preprocessing step.
To make the tool easily accessible, we envision a web platform that only has to do the relatively light computation of positions of objects in the ordinal factors.
A user can then interactively navigate the dataset to discover the preexisting relationships in the data.
To do so, we propose a navigation system consisting of slide controls.
Each slide control thereby corresponds to one factor.
In the case of an ordinal factorization with $k$ factors, there are $k$ slide controls.
For each concept of a factor, there is a position in the corresponding slide control.
As the top and bottom concepts of a factor usually have an empty object or attribute set, they do not get a corresponding position in the slide control.
Additionally, a 0-position is introduced for each slide control to indicate an empty selection.
Each position is annotated with the attributes that the corresponding concept gains compared to the previous one in the factor.
An analyst can then use the sliders to select the attribute positions for the slide controls.
The system then displays the objects ordered by their ordinal factorization distance to the selected position.
The second feature of this platform is that the analyst can select an object by clicking it.
Then, the system automatically moves the slider for each factor to the maximal position the selected object supports.
The remaining objects are then sorted based on the ordinal factorization distance of the selected position.
For a prototype of how such a navigation system might work, we refer the reader to our demonstration platform\footnote{\url{https://factoranalysis.github.io/ordinal/}}.
We computed the ordinal factorizations of this tool using the greedy factor analysis as proposed in this work.
The source code of this platform may be the basis for a more sophisticated exploration tool in the future.

\begin{table*}[t]
  \centering
  \caption{Dataset descriptions}
  \label{tab:datasets}
  \begin{tabular}{lrrrrr}
    \toprule
                                     & German AI & General AI & IMDb     & Wahl-O-Mat parties & Wahl-O-Mat statements \\
    \midrule
    Attributes                       & 133       & 137        & 28       & 38                 & 76                    \\
    Objects                          & 2238      & 218308     & 3784644  & 76                 & 38                    \\
    Concepts                         & 12709     & 1981691    & 6765616  & 24245              & 24245                 \\
    Density                          & 0.0258    & 0.0149     & 0.0680   & 0.429              & 0.429                 \\
    Mean attributes per concept      & 4.87      & 7.24       & 11.52    & 7.26               & 8.28                  \\
    Mean object per concept          & 7.35      & 13.54      & 792.78   & 8.28               & 7.26                  \\
    Size of covering relation        & 46776     & 11455411   & 63990855 & 24245              & 24245                 \\
    Number of greedy ordinal factors & 129       & 137        & 28       & 20                 & 20                    \\
    \bottomrule
  \end{tabular}
\end{table*}

\begin{figure*}[!ht]
  \centering
  \begin{enumerate}
    \item ICRA - IROS - I. J. Robotics Res. - IEEE Trans. Robotics - Robotics and Autonomous Systems - Auton. Robots - IEEE Robotics and Automation Letters - IEEE Robot. Automat. Mag. - J. Field Robotics - Robotics: Science and Systems - FSR - ISRR - AAAI - IJCAI - NIPS, ICML - Humanoids, Artif. Intell., J. Artif. Intell. Res., Machine Learning - ACL, AAMAS, CoRL, IEEE Trans. Pattern Anal. Mach. Intell., ICCV, UAI, J. Mach. Learn. Res., ICAPS
    \item CVPR - ICCV - ECCV - IEEE Trans. Pattern Anal. Mach. Intell. - International Journal of Computer Vision - Computer Vision and Image Understanding - ACCV - AAAI - IJCAI - NIPS - ICML - IEEE Trans. Neural Networks - NeurIPS - IJCNN - IEEE Trans. Knowl. Data Eng., ICDM - SDM, KDD, AISTATS - IEEE Trans. Neural Netw. Learning Syst., IEEE Trans. Systems, Man, and Cybernetics, Part C, Neural Computation - Data Min. Knowl. Discov., NAACL-HLT, Machine Learning, IROS, UAI, J. Mach. Learn. Res.
    \item AAAI - IJCAI - CIKM - KDD - ICDM - IEEE Trans. Knowl. Data Eng. - SDM - PAKDD - WWW - WSDM - SIGIR - ECML/PKDD - ICML - ACL - NIPS - CVPR - UAI, IEEE Trans. Pattern Anal. Mach. Intell. - Artif. Intell. - Machine Learning - ACL/IJCNLP, International Semantic Web Conference, EMNLP/IJCNLP, ECML - NeurIPS, HLT-NAACL, ICRA, CoNLL, Autonomous Agents and Multi-Agent Systems, NAACL-HLT, PKDD, ML, Robotics and Autonomous Systems, ECAI, COLT, AISTATS, J. Mach. Learn. Res., EMNLP, IROS, COLING, EMNLP-CoNLL
    \item IJCNN - Neural Networks - IEEE Trans. Neural Networks - ICANN - Neural Computation - NIPS - J. Mach. Learn. Res. - IEEE Trans. Pattern Anal. Mach. Intell. - Machine Learning - ICML - ECML/PKDD - UAI - SDM - KDD - COLT - EMNLP/IJCNLP, AAAI, PAKDD, AISTATS, RecSys - HLT-NAACL, CVPR, International Journal of Computer Vision, CIKM, SIGIR, WSDM, EACL, ICCV, ECML, WWW, EMNLP-CoNLL
    \item IROS - Robotics and Autonomous Systems - Auton. Robots - IEEE Robot. Automat. Mag. - ICRA - Humanoids - ISRR - Robotics: Science and Systems - IEEE Robotics and Automation Letters - I. J. Robotics Res. - NIPS - IJCAI - AISTATS - J. Mach. Learn. Res. - ICML, CoRL, ECML, IEEE Trans. Robotics, Neural Networks, Neural Computation, IJCNN - ECML/PKDD, IEEE Trans. Neural Netw. Learning Syst., Artif. Intell., AAAI, ICANN, AAMAS, Machine Learning, ECAI, IEEE Trans. Pattern Anal. Mach. Intell., ICAPS
    \item NIPS - ICML - AISTATS - J. Mach. Learn. Res. - UAI - Machine Learning - COLT - NeurIPS - KDD - ICDM - IJCAI - IEEE Trans. Pattern Anal. Mach. Intell. - AAAI - Neural Computation - ICRA, IEEE Trans. Neural Networks, SDM, Neural Networks, IJCNN - Humanoids, ICANN, Auton. Robots, PAKDD, IROS - Artif. Intell., IEEE Robotics and Automation Letters, CVPR, International Journal of Computer Vision, ACCV, PKDD, I. J. Robotics Res., WSDM, ECCV, ICCV, ECML, WWW, Robotics: Science and Systems, ICLR
    \item CHI - UbiComp - ACM Trans. Comput.-Hum. Interact. - IUI - IJCAI - AAMAS - J. Artif. Intell. Res. - HCOMP, Artif. Intell., AAAI - WWW, UAI - HLT-NAACL, ACL, ICML, CIKM, KDD, WSDM, EMNLP - ACL/IJCNLP, IEEE Trans. Knowl. Data Eng., KR, EMNLP/IJCNLP, NAACL-HLT, Machine Learning, AIPS, ICAPS
    \item IJCAI - Artif. Intell. - ECAI - J. Artif. Intell. Res. - KR - JELIA - TPLP - ICLP - AAAI - LPAR - RR - ESWC - IEEE Trans. Knowl. Data Eng., International Semantic Web Conference, RuleML - RuleML+RR - IJCAR, ILP, CPAIOR, CP, WWW, TARK
    \item CIKM - SIGIR - WWW - WSDM - KDD - ACL - EMNLP - EMNLP/IJCNLP - AAAI - NAACL-HLT - NeurIPS - NIPS, ICML - IJCAI - ECML/PKDD - J. Artif. Intell. Res. - COLT, ICDM, Machine Learning - ACL/IJCNLP, HLT-NAACL, Artif. Intell., HLT/EMNLP, ML, RecSys, SDM, UAI, EMNLP-CoNLL, ICLR - CoNLL, PAKDD, EACL, ICCV, COLING, J. Mach. Learn. Res.
    \item ACL - EMNLP - COLING - HLT-NAACL - EACL - NAACL-HLT - EMNLP/IJCNLP - CoNLL - EMNLP-CoNLL - HLT/EMNLP - J. Artif. Intell. Res. - AAAI - ICML - IJCAI, KDD, Machine Learning, NIPS, J. Mach. Learn. Res. - COLING-ACL - IEEE Trans. Knowl. Data Eng., KR, ILP, AAMAS, PKDD, ECCV, WWW, ECML/PKDD, CVPR, CIKM, IEEE Trans. Pattern Anal. Mach. Intell., NeurIPS, Artif. Intell., ECML, Neural Computation, COLT, AISTATS, ICDM, J. Autom. Reasoning
    \item KDD - ICDM - SDM - IEEE Trans. Knowl. Data Eng. - PAKDD - Data Min. Knowl. Discov. - ECML/PKDD - PKDD - Machine Learning - ICML - IJCAI - CIKM - AAAI - J. Mach. Learn. Res. - UAI - NeurIPS, J. Artif. Intell. Res., ECML - KR, Artif. Intell., ILP, ICRA, Int. J. Approx. Reasoning, Auton. Robots, ICLP, TPLP, EWSL, ML, ECAI, Inductive Logic Programming Workshop, Computer Vision and Image Understanding, ICCV, IROS, EMNLP-CoNLL, Probabilistic Graphical Models, IJCNN
    \item IEEE Trans. Pattern Anal. Mach. Intell. - ECCV - ACCV - Computer Vision and Image Understanding - International Journal of Computer Vision - ICCV - CVPR - IEEE Trans. Neural Netw. Learning Syst. - AAAI - IEEE Trans. Knowl. Data Eng. - ICDM - WWW - IJCAI - SIGIR - PAKDD - CHI, EMNLP, WSDM, KDD - EMNLP/IJCNLP, ECML/PKDD, HLT-NAACL, ACL, CIKM, SDM
  \end{enumerate}
  \caption{The attributes of the first twelve factors of a greedy factorization for the German AI dataset}
  \label{fig:germanai}
\end{figure*}

\section{Case Study}
\label{sec:experiments}

Now, we demonstrate for five exemplary datasets that ordinal factor analysis is a method that can be applied to discover relationships in a dataset.

\subsection{Source Code}

Our algorithms are implemented in Python 3.
An improved version of Lindigs algorithm~\cite{Lindig.2000, Krajca.2021} directly and fastly computes the covering relation of the concept lattice needed for our approach.
We thus use this method to compute the concept lattice.
Even though we carefully implemented the code of our algorithms, it is focused on allowing experiments rather than speed.
We thus believe that it is possible that optimizing the code could result in a speed-up.
The source code\footnote{\url{https://figshare.com/s/8edc72eaebc53e74f146}} is available for review and reproducibility.%

\subsection{Datasets}
We conduct our experiments on multiple datasets from various sources, which we introduce in this section.

\paragraph{German AI and General AI}
We generated this bibliometry based on a work by Koopmann et.al.~\cite{Koopmann.2021}.
It contains the publication relationship between authors and conferences in the realm of artificial intelligence.
They chose the conferences in this data based on an article by Kersting~\cite{Kersting.2019}.
We generate from this data two different datasets.
The first one contains the AI community restricted to authors with a German affiliation, while the other is unrestricted.
We refer to the two datasets by \emph{German AI} and \emph{General AI}.

\paragraph{IMDb}
We generated this dataset from the IMDb, a database that contains information on films and series.
The movies and series are the objects of the dataset and their categories the attributes.
We use the data as it was on the 6th of October 2021~\cite{imdb} and refer to this dataset as the \emph{IMDb} dataset.

\paragraph{Wahl-O-Mat}
The Wahl-O-Mat is a website provided by the ``Bundeszentrale für politische Bildung''~\cite{wahlomat} in Germany.
It is a tool that helps german citizens decide which parties to vote in federal elections.
Thereby a set of statements is provided on which the citizens have to decide whether they agree or disagree.
Afterward, the system compares the results to statements of different parties to suggest which party might fit the views of the specific citizen.
We extracted the statements and parties of the 2021 election to create two datasets.
The first one, which we refer to as \emph{Wahl-O-Mat-parties} has the parties as objects and the statements as attributes.
The dataset \emph{Wahl-O-Mat-statements} has the statements as objects and the parties as attributes.

\subsection{Runtime}
We performed all computations on an Intel Xeon Gold 5122 CPU equipped with 800 GB of memory.
For each dataset, we used the methods from this work to compute a complete ordinal factorization.
In \cref{tab:runtime} the runtime of \textsc{OrdiFIND} is compared to the naive algorithm.
Also, the runtime of the computation of the concept lattice is listed.
One can observe that in almost all instances, the computation of the covering relation takes a substantial amount of time.
It is also clearly visible that \textsc{OrdiFIND} outperforms the naive version of the algorithm.
We were not able to compute the factorization of the IMDb dataset using the naive algorithm.
All in all, we conclude that it is feasible to apply the method proposed in this paper on all datasets, for which it is possible to compute the concept lattice.
\begin{table}[b]
  \centering
  \caption{Runtime of the algorithm}
  \label{tab:runtime}
  \begin{tabular}{lrrr}
    \toprule
                          & Concepts & Naive  & OrdiFIND \\
    \midrule
    General AI            & 4.63h    & 4.18h  & 3.57h    \\
    German AI             & 3.76s    & 33.54s & 12.97s   \\
    IMDb                  & 4.46d    & -      & 14.42d   \\
    Wahl-O-Mat parties    & 4.34s    & 16.92s & 11.96s   \\
    Wahl-O-Mat statements & 8.18s    & 14.23s & 13.36s   \\
    \bottomrule
  \end{tabular}
\end{table}

\subsection{Interesting Findings}
\label{sec:discussion}

In this section we demonstrate how to use the method to discover relationships and discuss findings in our datasets.

\paragraph{German AI and General AI}

The greedy factorization of the German AI and the General AI datasets results in 129 and 137 factors, respectively.
For the German AI dataset, we include the factors in~\cref{fig:germanai}, the prototype web platform from \cref{demo} provides insight for the General AI community.
It seems sensible to interpret the single factors as communities of conferences, where authors of similar interest publish.
For example, the first factor in both cases consists of Robotics conferences.
The second factor of the German AI dataset corresponds then to the computer vision community, while the third factor contains the general AI conferences.
Because of the nature of ordinal factorizations, the conferences that appear early in such a factor are the more general ones.
The ones that are deep into a factor are more specialized.
Such an ordinal factor analysis can thus provide an author with a tool to select a conference.
It is interesting to note that the communities discovered with this method mostly coincide with the AI communities as formulated by Kersting in~\cite{Kersting.2019}.

\paragraph{IMDb}

The complete greedy factorization of the movie dataset by IMDb contains 27 factors.
Each factor corresponds to a linear order of genres.
One can derive knowledge about the movie landscape using these factors as follows.
The first factor seems to focus on light entertainment.
As most movie productions are in this factor, this is one with the highest relevance for the industry.
The second factor then has a more educational focus.
The third one is once again a factor with entertaining shows.
However, the entertainment in this factor is not as light as in the first one.
This factorization is valuable, as it is, to navigate a movie database or any database with a similar genre system.
We furthermore believe that it is possible to extend the current method to a more sophisticated recommendation paradigm based on these factors.
The users could then select, for example, that they want to view a movie that is highly entertaining but also a bit educational.
Then the system would recommend movies based on the ordinal factor distance.

\paragraph{Wahl-O-Mat}

We use two different interpretations of the Wahl-O-Mat dataset for our research.
The reason is that, in our opinion, both datasets are of interest for examination with ordinal factor analysis.
In the first one, the factors contain the political parties, while in the second case, they contain statements.
When discussing political topics, it is common to refer to statements as progressive or conservative.
Often, these statements are even compared with each other concerning their progressiveness or conservativeness.
Thus, the everyday intercourse with political topics implies that there is a linear order on the positions.
The same is true for political parties, which we also ordered similarly.
However, in a diverse political landscape, a single order is most likely not adequate to be analyzed in this way.
In our dataset, we identify parties typically classified as right-wing, with some progressive positions such as raising the minimum wage.
The ordinal factor analysis enables a more sophisticated investigation of this situation.
Each factor provides a linear order which arranges the parties or statements respectively.
Thereby, each factor is consistent with the positions of a party.
If we analyze the statement dataset, we can observe that the classical progressive positions are condensed in the first factor, while the second factor contains the conservative ones.
The other factors are more specialized as the order of the positions bases not on the conservative-progressive spectrum but, for example, migration or economics.
In every factor, the early statements are those many parties hold.
Thus they are the ones that have the highest probability to be implemented by a future government.

\subsection{Limitations}

The main limitation of the method is the computational feasibility.
The algorithm has to solve an $NP$-complete problem which results in an exponential-time algorithm with respect to the size of the input data.
Thus, it is not feasible for large datasets, where it is currently impossible to compute concept lattices.
A further limitation is that the algorithm, as it is, currently can only deal with binary datasets.
Datasets with numerical or categorical data have to be scaled into a binary form before the algorithm can be applied.

\section{Conclusion and Outlook}
\label{sec:conclusion}
In this work, we proposed the algorithm \textsc{OrdiFIND} by building upon the prior existing notion of ordinal factor analysis.
To do so, we leveraged fast algorithms developed in formal concept analysis.
The resulting algorithm enables us to compute a greedy ordinal factorization for larger unordered binary datasets, which commonly appear in data mining tasks.
Leveraging this factorization, we demonstrated how to discover relationships in the original data.

Datasets often consist not only of binary but also already ordinal data.
The ordinal factor analysis in its current form can only deal with this data by interpreting it as binary.
While scaling in formal concept analysis is a tool to deal with this data, factors will not necessarily respect the order encapsulated in the data.
In our opinion, the next step should be to extend this method to deal with ordinal data directly.
Furthermore, faster algorithms for the concept lattice are always desirable, especially in the modern web, where the growth of datasets is often even accelerating.
For datasets of sizes unfeasible for our method, heuristics to approximate the ordinal factors may be developed.
Finally, we believe it could be interesting to explore other non-ordinal many-valued factors.

\bibliographystyle{IEEEtran}
\bibliography{factor}

\clearpage

\end{document}